\journalname{SN Computer Science}
\def\Dist{\mathop{\rm Dist}}
\def\A{\mathop{\rm A}}
\def\sinc{\mathop{\rm sinc}}
\begin{document}
\title{Theoretical Bounds on Data Requirements for the Ray-Based Classification}

\author{Brian J Weber \and 
        Sandesh S Kalantre  \and 
        Thomas McJunkin \and 
        Jacob M. Taylor \and 
        Justyna P. Zwolak 
}

\authorrunning{B. J. Weber,  S. S. Kalantre,  T. McJunkin,  J. M. Taylor, J. P. Zwolak} 
    
\institute{B. J. Weber \at
            Institute of Mathematical Sciences, ShanghaiTech University,
            Shanghai, 201210, China \\
            \email{bjweber@shanghaitech.edu.cn}
            \and
            S. S. Kalantre \at
            Joint Quantum Institute, University of Maryland,\\
            College Park, MD 20742, USA
            \and
            T. McJunkin \at
            Department of Physics, University of Wisconsin, \\
            Madison, WI 53706, USA
            \and
            J. M. Taylor \at
            National Institute of Standards and Technology,\\
            Gaithersburg, MD 20899, USA
            \and 
            J. P. Zwolak \at
            National Institute of Standards and Technology,\\
            Gaithersburg, MD 20899, USA \\
            \email{jpzwolak@nist.gov}
}
\maketitle

\begin{abstract}
The problem of classifying high-dimensional shapes in real-world data grows in complexity as the dimension of the space increases. For the case of identifying convex shapes of different geometries, a new classification framework has recently been proposed in which the intersections of a set of one-dimensional representations, called \emph{rays}, with the boundaries of the shape are used to identify the specific geometry. 
This ray-based classification (RBC) has been empirically verified using a synthetic dataset of two- and three-dimensional shapes (Zwolak {\it et al.} in Proceedings of Third Workshop on Machine Learning and the Physical Sciences (NeurIPS 2020), Vancouver, Canada {[December 11, 2020]}, arXiv:2010.00500, 2020) and, more recently, has also been validated experimentally (Zwolak {\it et al.}, PRX Quantum {\bf 2}:020335, 2021).
Here, we establish a bound on the number of rays necessary for shape classification, defined by key angular metrics, for arbitrary convex shapes. 
For two dimensions, we derive a lower bound on the number of rays in terms of the shape’s length, diameter, and exterior angles. 
For convex polytopes in $\mathbb{R}^N$, we generalize this result to a similar bound given as a function of the dihedral angle and the geometrical parameters of polygonal faces. 
This result enables a different approach for estimating high-dimensional shapes using substantially fewer data elements than volumetric or surface-based approaches. 
\keywords{Deep learning \and Image classification \and Convex polytopes \and High-dimensional data \and Quantum dots}
\end{abstract}


\section*{Introduction}\label{sec:intro}
The problem of recognizing objects within images has received immense and growing attention in the literature. 
Aside from visual object recognition in two and three dimensions in real-world applications, such as in medical images segmentation or in self-driving cars, recognizing and classifying objects in $N$ dimensions can be important in scientific applications. 
A problem arises in cases where data are costly to procure; another problem arises in higher dimensions, where shapes rapidly become more varied and complicated and classical algorithms for object identification quickly become difficult to produce. 
We combine machine learning algorithms with sparse data collection techniques to help overcome both problems.

The method we explore here is the {\it ray-based classification} (RBC) framework, which utilizes information about large $N$-dimensional data sets encoded in a collection of one-dimensional objects, called {\it rays}. 
Ultimately, we wish to explore the theoretical limits of how few data---how few rays, in our case---are required for resolving features of various sizes and levels of detail. 
In this paper, we determine these limits when the objects to be classified are convex polytopes.

The RBC framework measures convex polytopes by choosing a so-called observation point within the polytope, shooting a number of rays as evenly spaced as possible from this point, and recording the distance it takes for each ray to encounter a face.
While it is reasonable to expect that an explicit algorithm for recognizing polygons in a plane can be developed, in arbitrary dimension, such an explicit algorithm would be tedious to produce and theoretically unenlightening.
Since our work presumes a high cost of data acquisition but not computing power, we leave the actual classification to a machine learning algorithm. 
This paper produces theoretical bounds on how {\it few} data are required for a neural network to reliably classify shapes.

This project originated in the context of quantum information systems, specifically in the problem of calibrating the state of semiconductor quantum dots to work as qubits. 
The various device configurations create an irregular polytopal tiling of a configuration space, and the specific shape of a polytope conveys useful information about the corresponding device state. 
Our goal is to map out these shapes as cost-effectively as possible.
Here, the cost arises because polytope edges are detected through electron tunneling events which places hard physical limits on data acquisition rates. 
Apart from this original application, the techniques we developed should be valuable in any situation where object classification must be done despite constraints on data acquisition.

\section*{Related Work}\label{sec:context}
In the broad field of data classification in $N$ = 2, 3, 4, \textit{etc}.\ dimensions, there are many unique approaches, often tailored to the constraints of the problem at hand. 
For example, higher-dimensional data can be projected onto lower dimensions to employ standard deep learning techniques such as 3D ConvNets~\cite{Shi2015, Cao2017, Lyu2020}. 
Multiple low-dimensional views of higher-dimensional data can be collected to ease data collection and recognition~\cite{Zhao2017}. 
Models such as ShapeNets~\cite{Wu2015} directly work with 3D voxel data.
Data collected using depth sensors can be presented as RGB-D data~\cite{Ward2019, Socher2012, Cao2016} or point clouds~\cite{Rusu2011, Soltani2017} representing the topology of features present. 
Often, depth information is sparsely collected due to limitations of the depth sensors themselves. 
Within the field of representing 3D or higher-dimensional data as point clouds, data can be treated in various ways, such as simply $N$-dimensional coordinates in space~\cite{Qi2017}, patches~\cite{Tretschk2020}, meshed polygons~\cite{Lieberknecht2011}, or summed distances of the data to evenly spaced central points~\cite{Ng2020}. 
However, unlike most point-cloud-based classification frameworks, the proposed RBC
directly relies on ordered sets of points for predictions.

Critically, the RBC approach is suited for an environment in which data can be collected in any vector direction in $N$ dimensional space while even coarse data collection of the total space would be practically too expensive or unfeasible. 

Historically, it is well-known that the complexity of any classification problem intensifies in higher dimensions. 
This is the so-called {\it curse of dimensionality} \cite{bellman1966dynamic}, which has a negative impact on generalizing good performance of algorithms into higher dimensions. 
In general, with each feature and dimension, the minimum data requirement increases exponentially.
This can be seen in the present work: according to Theorem \ref{thm:bound_in_RN}, the data requirement increases like $\sqrt{N}e^{\alpha{}N}$.
At the same time, in many applications data acquisition is very expensive, resulting in datasets with a large number of features and a relatively small number of samples per feature (so-called {\it High Dimension Low Sample Size} datasets~\cite{hall2005geometric}).
To address these problems, a number of algorithms have been proposed to effectively select the most important features in the dataset (see, {\it e.g.}, \cite{Vapnik95, Freund96, Breiman00, Vapnik00, Hofmann08}).

Within the field of quantum dots, presented here as an application of the RBC framework, several strategies for classification and tuning using various machine learning techniques have been implemented. 
Using variational auto-encoders, standard device measurements have been optimized to reduce the total number of measurements required~\cite{Lennon19-EMM} and to automate fine tuning in higher ($N>2$) dimensions~\cite{Esbroeck2020}. 
Machine learning-based binary classifiers have been used to classify 2D stability diagrams as either good or bad for further experimental use~\cite{Darulova2020}. 
Several different CNNs have been implemented to classify 2D dot data using experimental~\cite{Durrer2020}, simulated \cite{Kalantre17-MLD}, or a combination of both data types~\cite{Darulova2020preprint}.
A machine learning algorithm has even been implemented to explore up to an eight-dimensional space to optimally tune towards a desired experimental state \cite{Moon2020}.
More recently, the ray-based measurement scheme has been implemented in conjunction with an active learning algorithm to estimate convex polytopes defining quantum dot states in 3D and 4D~\cite{Krause21-ECP,Chatterjee21-AEC}.

\section*{Problem Formulation}\label{sec:ProbForm}
We begin with a convex region $\mathcal{Q}\subseteq\mathbb{R}^N$ along with a point $x_o$, the observation point, in the interior of $\mathcal{Q}$.
Given a unit vector $v$, the ray based at $x_o$ in the direction $v$ is
\begin{eqnarray}
    \mathfrak{R}_{x_o,v}
    \;=\;\{x_o\,+\,tv\;|\;t\in[0,\infty)\}.
\end{eqnarray}
The set of directions $v$ at $x_o$ is naturally parameterized by the unit sphere $\mathbb{S}^{N-1}$.
$M$ many directions $v_1,\dots,v_M\in\mathbb{S}^{N-1}$ produces $M$ many rays $\{\mathfrak{R}_i\}_{i=1}^M$, $\mathfrak{R}_i=\mathfrak{R}_{x_o,v_i}$ based at $x_o$.
Because $\mathcal{Q}$ is convex, in the direction $v_i$ there will be a unique distance $t_i$ at which the boundary $\partial\mathcal{Q}$ is encountered.
Given a set of directions and an observation point, the corresponding collection of distances is called the {\it point fingerprint}.
\begin{definition}
    Given a convex region $\mathcal{Q}$, a point $x_o\in\mathcal{Q}$, and a set of directions $\{v_i\}_{i=1}^M\subset{}\mathbb{S}^{N-1}$, the corresponding {\bf point fingerprint} is the vector
    \begin{eqnarray}
    \mathcal{F}(\mathcal{Q},x_o,\{v_i\}_{i=1}^M)\;\equiv\;\mathcal{F}_{x_o}
    \;=\;\big(t_1,\dots,t_M\big)
    \end{eqnarray}
    where $t_i\in(0,\infty]$ is unique value with $x_o+t_iv_i\in\partial\mathcal{Q}$.
\end{definition}
In practice, there will be an upper bound on what values the $t_i$ may take, which we call $T$.
If the ray does not intersect $\partial\mathcal{Q}$ prior to distance $T$, one would record $t_i=\infty$, indicating the region's boundary is effectively infinitely far away in that direction.

\begin{figure}[b]
	\centering
    \includegraphics[width=\linewidth]{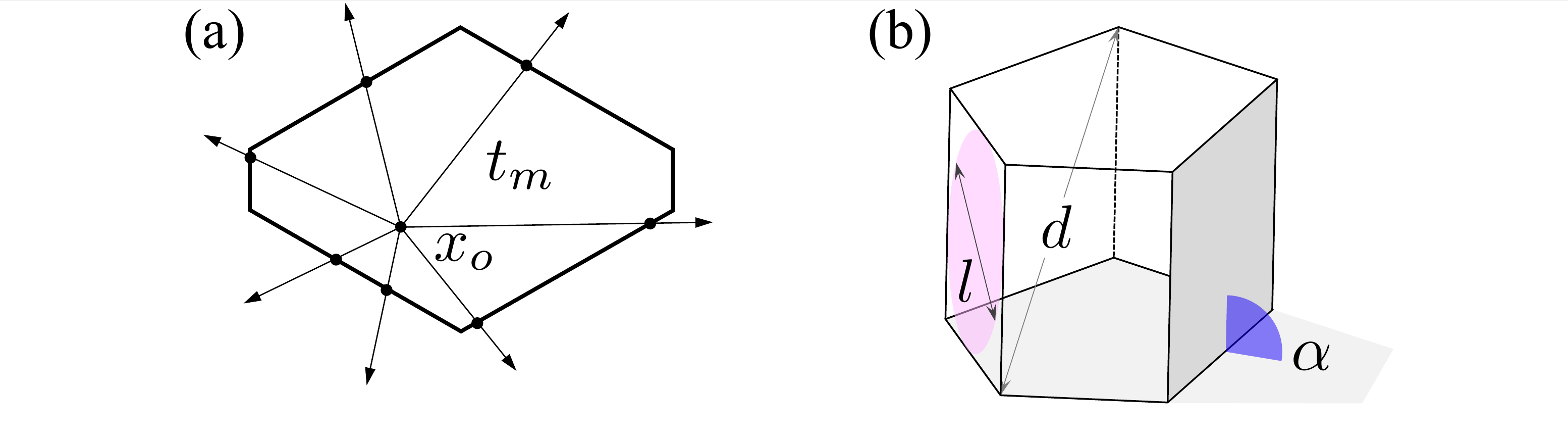}
	\caption{{\bf a} A sample polygon with 7 evenly spaced rays based at $x_o$, with $t_m$ denoting the distance from $x_o$ to the polygon edge $\partial\mathcal{Q}$. {\bf b} A depiction of a minimum interior diameter of a face $l$, the minimum exterior dihedral angle $\alpha$, and the maximum possible polytope diameter $d$ for a sample polytope in $\mathbb{R}^3$.}
	\label{fig:fig_1}
\end{figure}

The fingerprinting process is depicted in Fig.~\ref{fig:fig_1}a.
The question is to what extent one can characterize, or approximately characterize, convex shapes knowing only a fingerprint.
If nothing at all is known about the region $\mathcal{Q}$ except that it is convex, full recognition requires infinitely many rays measured in all possible directions, effectively resulting in measuring the entire $N$-dimensional space. 
However, it turns out that if one puts restrictions on what the objects could be---for instance, if it is known that $\mathcal{Q}$ must be a certain kind of polytope---information captured with a fingerprint may be sufficient.
Better yet, if we do not require a full reconstruction of the shape but only some coarser form of identification, for example, if we must distinguish triangles from hexagons but do not care exactly what the triangles or hexagons look like, then fingerprints can be made even smaller.

With an eye toward eventually approximating arbitrary regions with polytopes, we define the following polytope classes.
\begin{definition}
    Given $N\in\{2,3,\dots\}$ and $d,l,\alpha>0$, let $\mathcal{Q}(N,d,l,\alpha)$ be the class of convex polytopes in $\mathbb{R}^N$ that have diameter at most $d$, all face inscription sizes at least $l$, and all exterior dihedral angles at most $\alpha$.
\end{definition}

The ``inscription size'' of a polytope face is the diameter of the largest possible $(N-1)$-disk inscribed in that face. 
In the case $N=2$, polytopes are just polygons and polytope faces are line segments. 
In this case, the inscription size of a face is just its length. For the case of $N=3$, the inscription size of a face is the diameter of the largest possible disk inscribed in this face, see Fig.~\ref{fig:fig_1}b. We can now formulate the following identification problem.
\begin{problem}[The identification problem] \label{prob:identity}
    Given a polytope $\mathcal{Q}\in\mathcal{Q}(N,d,l,\alpha)$, determine the smallest $M$ so that, no matter where $x_o\in\mathcal{Q}$ is placed, a fingerprint made from no more than $M$ many rays is sufficient to completely characterize $\mathcal{Q}$.
\end{problem}
Again, the actual identification is done with a machine learning algorithm.
Resolving Problem \ref{prob:identity} will tell us how few data we can feed a neural network and still expect it to return a good identification. In $\mathbb{R}^2$, we actually solve this problem and find an optimal value of $M$.
In higher dimensions, we find a value for $M$ that works, but could be sharpened in some applications.

Hidden in Problem \ref{prob:identity} is another problem we call the {\it ray placement problem}. 
To explain this, note that a large number of rays may be placed at $x_o$, but if the rays are clustered in some poor fashion, very little information about the polytope overall geometry will be contained in the fingerprint.
This means that before one can determine {\it how many} rays are needed, one must already know {\it where} to place the rays.

In $\mathbb{R}^2$, this placement problem is easily solved: choosing a desired offset $v_0$, the $v_i$ are placed at intervals of $2\pi/M$ along the unit circle. 
In higher dimensions, the placement problem is much more difficult and we have to work with suboptimally spaced rays.
In fact, as we discuss later in this paper, even in $\mathbb{R}^3$ an optimal placement is out of reach.
To overcome this problem, we propose a general placement algorithm that works in arbitrary dimension and is reasonably sharp.
As we show, the proposed algorithm is sufficient to enable concrete estimates on the numbers of rays required to resolve elements in $\mathcal{Q}(N,d,l,\alpha)$.

In many practical applications, such as calibration of quantum dot devices mentioned earlier, Problem \ref{prob:identity} is much too strict.
Often we do not need to reconstruct polytopes exactly but only classify them to within approximate specifications.
For example, we may only wish to know if a triangle is ``approximately'' a right triangle, without needing enough data to fully reconstruct it.
Or we may wish to distinguish triangles and hexagons, and not care about other polyhedra. Theoretically, this involves separating the full polytope set $\mathcal{Q}(N,d,l,\alpha)$ into disjoint subclasses $\mathcal{C}_1,\dots,\mathcal{C}_K\subset\mathcal{Q}(N,d,l,\alpha)$, with possibly a ``leftover'' set $C_L=\mathcal{Q}(N,d,l,\alpha)\setminus\bigcup_{i=1}^K\mathcal{C}_i$ of unclassifiable or perhaps unimportant objects.
The idea is that an object's importance might not lie in its exact specifications, but in some characteristic it possesses.
\begin{problem}[The classification problem] \label{prob:classification}
    Assume $\mathcal{Q}(N,d,l,\alpha)$ has been partitioned into classes $\{\mathcal{C}_i\}_{i=1}^K$.
    Given a polytope $\mathcal{Q}$, identify the $\mathcal{C}_i$ for which $\mathcal{Q}\in\mathcal{C}_i$.
\end{problem}
The classification problem is more suitable for machine learning than the full identification problem.
This is in part because the outputs are more discrete (we can arrange it so the algorithm returns the integer $i$ when $\mathcal{Q}\in\mathcal{C}_i$), and in part because machine learning usually produces systems good at identifying whole classes of examples that share common features, while ignoring unimportant details.

Importantly, a satisfactory treatment of the classification problem can lead to solutions of more complicated problems, such as classifying compound items like tables, chairs, etc. in a 3D environment or geometrical objects obtained through measurements of an experimental variable in some parameter space.
Depending on the origin or purpose of such objects, they naturally belong to different categories.
For example, in the 3D real world, furniture and plants define two distinct classes that, if needed, can be further subdivided (e.g., a subclass of chairs, tables).
Objects belonging to a single class, in principle, share common characteristics or similar geometric features of some kind.

We close this section with two remarks.
The first is that the RBC framework has already seen considerable experimental success~\cite{Zwolak20-RBC}.
The second remark concerns a subordinate problem that is beyond the scope of this work: boundary identification.
In the quantum computing application for which RBC was originally designed \cite{Zwolak20-RBC} boundaries are identified by measuring discrete tunneling events, and there is little ambiguity in determining when a boundary was crossed.
Since the fingerprinting method relies on identifying boundary crossings, in other circumstances boundary detection might require some other resolution.
For now we only mention that machine learning methods should be able to compensate, to an extent, for boundaries that are indistinct or partially undetectable, and such algorithms often remain robust in the presence of noise.
We shall have more to say about this in future work.

\begin{figure}[t]
	\centering
    \includegraphics[width=\linewidth]{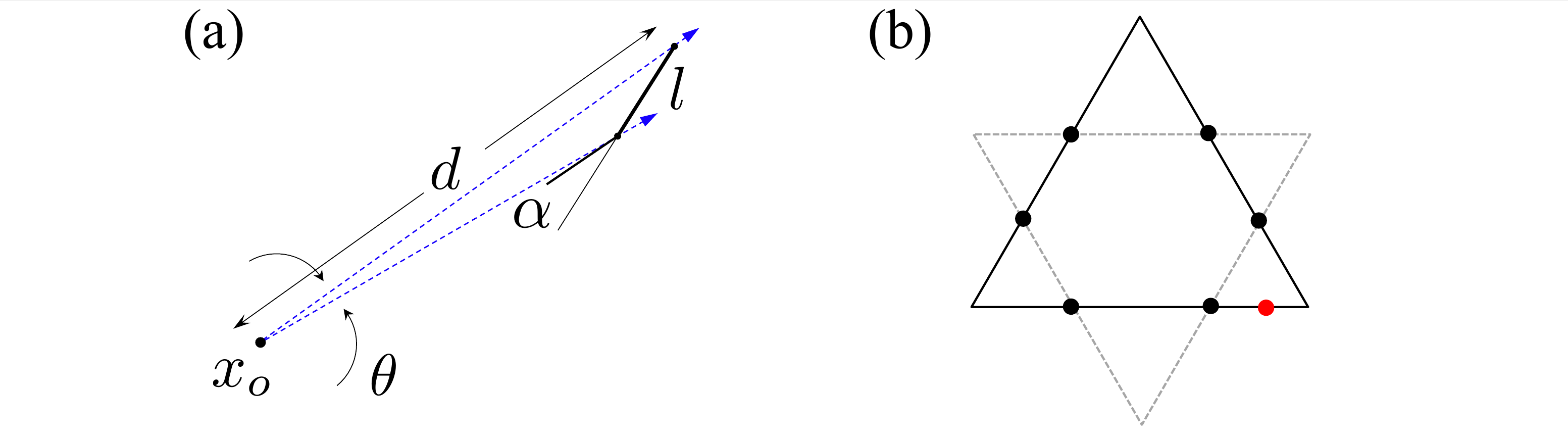}
	\caption{{\bf a} A depiction of the angular span, $\theta$ (marked with curved arrows). {\bf b} Ambiguity between a polygon $\mathcal{Q}$ (solid black) and its dual $\mathcal{Q}^*$ (dashed gray), resolved with a single additional intersection point marked in red.} 
	\label{fig:fig_2}
\end{figure}

\section*{Main Results} \label{sec:main_results}
A solution to Problem~\ref{prob:classification} in the supervised learning setting is obtained by training a deep neural network (DNN) with the input being the point fingerprint and an output identifying an appropriate class. Apriori it is unclear how many rays are necessary for a fingerprint-based procedure to reliably differentiate between polytopes. With data acquisition efficiency being the focus of this work, we want to theoretically determine the lower bound on the number of rays needed.
Such a bound is fully within reach for polygons in $\mathbb{R}^2$ (Theorem \ref{thm:bound_in_2D}), and can be approximated in all higher dimensions (Theorem \ref{thm:bound_in_RN}).

For a polytope face to be visible in a fingerprint, at least one ray must intersect it. To establish not only the presence of a face but its orientation in $N$-space, at least $N$ many rays must intersect it. The smaller a face is, the further away from the observation point $x_o$ it is, or the more highly skewed its orientation is, the more difficult it is for a ray to intersect it.
We address the case of polygons in $\mathbb{R}^2$ first, as we obtain the most complete information there.

\subsection*{\bf\emph{The Identification Problem in $\mathbb{R}^2$}}
\label{subsec:R2_pol_determ}
Recall that $\mathcal{Q}(2,d,l,\alpha)$ is the class of polygons in the plane with diameter $<d$, all edge lengths $>l$, and all exterior angles $<\alpha$.

\begin{theorem}[Polygon identification in $\mathbb{R}^2$]\label{thm:bound_in_2D}
    Assume $\mathcal{Q}$ is a polygon in $\mathcal{Q}(2,d,l,\alpha)$, and let $x_o$ be a point in the polygon's interior, from which $M$ many evenly spaced rays emanate. If
    \begin{equation}\label{eq:ray_number_bound}
    M\;>\;\Bigg\lceil\frac{4\pi}{\arcsin\left(\frac{l}{d}\sin\alpha\right)}\Bigg\rceil,  
    \end{equation}
    then two or more rays will intersect each boundary segment of $\mathcal{Q}$, and one segment will be hit at least 3 times.
\end{theorem}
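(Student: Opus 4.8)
The plan is to reduce everything to one geometric estimate: at the observation point $x_o$, every boundary edge of $\mathcal{Q}$ subtends an angular span at least $\theta_\star:=\arcsin\!\big(\tfrac{l}{d}\sin\alpha\big)$. Granting this, both conclusions are elementary. Convexity of $\mathcal{Q}$ together with interiority of $x_o$ guarantees that the directions meeting a fixed edge form a single arc of $\mathbb{S}^1$, and these arcs tile the circle; since each ray meets exactly one edge, the ray--edge incidences number exactly $M$, while the number of edges is at most $2\pi/\theta_\star$. The hypothesis $M>\lceil 4\pi/\theta_\star\rceil$ says precisely that the spacing $\delta=2\pi/M$ obeys $\delta<\theta_\star/2$. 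Any open arc of width exceeding $2\delta$ contains at least two equally spaced directions, so every edge is struck at least twice. The final clause is then pigeonhole: with at most $2\pi/\theta_\star$ edges but $M>4\pi/\theta_\star\ge 2\cdot(\#\text{edges})$ incidences to place, some edge must absorb three or more.

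It remains to prove the span bound. Fix an edge $AB$ with $|AB|\ge l$, let $\theta=\angle Ax_oB$, let $r$ be the distance from $x_o$ to the farther endpoint, and let $\phi$ be the angle of the triangle $x_oAB$ at the nearer endpoint. The law of sines gives $\sin\theta=|AB|\sin\phi/r\ge (l/d)\sin\phi$, using $r\le d$. Everything therefore rests on controlling the obliqueness of the view, i.e.\ on the inequality $\sin\phi/r\ge\sin\alpha/d$. I would identify the extremal configuration by moving $x_o$ to the boundary of its feasible region---the intersection of $\mathcal{Q}$ with the radius-$d$ disk about the far endpoint---and find that the minimizer is the grazing position, where the sight line to the far endpoint runs along the edge adjacent to the near endpoint. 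A short computation in coordinates there gives $\sin\theta=(l/d)\sin\beta$, with $\beta\le\alpha$ the exterior angle at that vertex, and equality with $\theta_\star$ when $\beta=\alpha$.

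The subtle step---and the one I expect to be the main obstacle---is to confirm that this grazing value is a genuine lower bound, so that no admissible polygon lets an edge be seen more obliquely. Viewed vertex by vertex the bound looks false: a smaller turn $\beta$ shrinks $(l/d)\sin\beta$. What rescues it is global closure. The exterior angles sum to $2\pi$, so a small turn at the near vertex forces the rest of the boundary to turn through nearly all of $2\pi$; with each turn at most $\alpha$ and each edge at least $l$, such a chain cannot simultaneously reach out to distance $d$ and close up inside a set of diameter $d$---it necessarily bows into a region far larger than $d$ across. Distance and obliqueness thus trade off against one another, and the quantitative form of this trade-off is exactly $\sin\phi/r\ge\sin\alpha/d$, saturated only at the grazing configuration. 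Establishing this coupling rigorously is the crux; once it is in hand, the counting of the first paragraph finishes the proof.
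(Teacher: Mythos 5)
Your reduction and your counting are sound, and they coincide with the paper's own argument: the paper likewise defines the angular span at $x_o$, asserts the same extremal value $\theta_\star=\arcsin\bigl(\tfrac{l}{d}\sin\alpha\bigr)$ via the law of sines, and chooses $M$ so that the ray spacing $2\pi/M$ is at most $\theta_\star/2$. Your pigeonhole argument for the three-hit clause (at most $2\pi/\theta_\star$ edges versus more than $4\pi/\theta_\star$ incidences) is in fact cleaner than the paper's treatment, which simply asserts that replacing ``$\ge$'' by ``$>$'' produces a third hit.

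The genuine gap is exactly the one you concede in your last paragraph: the span bound is never proved, and the rescue you propose cannot work, because under the hypothesis as you (and the theorem statement) read it---all exterior angles \emph{at most} $\alpha$---the span bound is false. Concretely, take $d$ much larger than $l/\alpha$ and build a long, thin convex polygon: an edge $AB$ of length $l$; a tiny exterior angle $\epsilon\ll\alpha$ at $B$; a single long edge $BB'$ of length roughly $d-l$ continuing almost along the line $AB$; then close the polygon with polygonal caps whose exterior angles are at most $\alpha$ and whose edges have length at least $l$. Turning through the remaining $2\pi-\epsilon$ only forces features of diameter on the order of $l/\sin(\alpha/2)$, which fits comfortably inside diameter $d$; so the polygon lies in $\mathcal{Q}(2,d,l,\alpha)$ under the ``at most'' reading. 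An observation point just inside the boundary near $B'$ sees $AB$ at angle roughly $\epsilon l/d$, which is arbitrarily small compared with $\arcsin\bigl(\tfrac{l}{d}\sin\alpha\bigr)$. This shows that distance and obliqueness do \emph{not} trade off globally: a tiny turn at one vertex costs essentially nothing elsewhere, so no closure argument can deliver the coupling $\sin\phi/r\ge\sin\alpha/d$. What is true---and what the caption of Fig.~\ref{fig:fig_1}b actually says, in contradiction with the class definition and the statement of Theorem~\ref{thm:bound_in_2D}---is that $\alpha$ must be the \emph{minimum} exterior angle. Under that reading (with $\alpha\le\pi/2$) the span bound has a purely local proof, with no global input at all: since $x_o$ lies in the polygon, it lies in the cone of the polygon at each endpoint of $AB$, so each base angle of the triangle $ABx_o$ is at most the interior angle there, hence at most $\pi-\alpha$. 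If some base angle is obtuse its sine is therefore at least $\sin\alpha$, and the law of sines gives $\sin\theta\ge\tfrac{l}{d}\sin\alpha$; if both base angles are acute, the foot $F$ of the perpendicular from $x_o$ lies in $AB$, and superadditivity of $\arcsin$ gives $\theta\ge\arcsin(|AF|/d)+\arcsin(|FB|/d)\ge\arcsin(|AB|/d)\ge\arcsin\bigl(\tfrac{l}{d}\sin\alpha\bigr)$. The paper's proof glosses over all of this (it cites only the law of sines and Fig.~\ref{fig:fig_2}a), so your difficulty at this step is diagnostic of a real defect in the stated hypotheses; but the repair lies in correcting the inequality on the exterior angles, not in the global closure argument you sketch.
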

The $\lceil\,\cdot\,\rceil$ notation indicates the usual ceiling function.
\begin{proof}
At the observation point $x_o$, each boundary segment has an {\it angular span}, defined to be the angle formed by joining $x_o$ to the segment's two endpoints; this is depicted by the angle $\theta$ in Fig.~\ref{fig:fig_2}a.
The idea is to compute the smallest possible angular span---which we call $\theta_{\min}$---given our constraints on $d$, $l$ and $\alpha$.
If we select $M$ such that $2\pi/M\le\frac12\theta_{\min}$, which is the same as selecting
$$M\;\ge\;\lceil4\pi/\theta_{\min}\rceil,$$
then the set of directions placed at intervals of $2\pi/M$ will intersect any angular interval of length $\ge\theta_{\min}$ a minimum of twice.
Consequently, the corresponding set of rays $\{\mathfrak{R}_i\}_{i=1}^M$ will intersect each boundary segment a minimum of twice.

From the Law of Sines, we find the smallest possible angular span to be $\theta_{\min}= \arcsin\left(\frac{l}{\,d\,}\sin\alpha\right)$, as depicted in Fig.~\ref{fig:fig_2}a.
We conclude that when
\begin{eqnarray}
    M\;\ge\;
    \left\lceil
    \frac{4\pi}{\theta_{\min}}
    \right\rceil
    \;=\;\left\lceil
    \frac{4\pi}{\arcsin\left(\frac{l}{d}\sin(\alpha)\right)}\right\rceil
    \label{eq:ray_number_bound_der}
\end{eqnarray}
and the directions are $v_i=v_0+2\pi{}i/M$, $i\in\{1,\dots,M\}$ (where $v_0$ is any desired offset), then the rays $\{\mathfrak{R}_i\}_{i=1}^M$ will intersect each polygon edge at least twice.

Replacing the ``$\ge$'' in (\ref{eq:ray_number_bound_der}) with ``$>$'' will ensure that each edge is hit by two rays, and at least one ray is hit by three rays. This concludes the proof.
\qed
\end{proof}

Knowing the location of two points on each edge is almost, but not quite, sufficient for identifying the polygon.
There remains an ambiguity between the polygon and its dual; see Fig.~\ref{fig:fig_2}b. This is resolved if at least one edge is hit 3 times. Thus Theorem \ref{thm:bound_in_2D} completely solves the identification problem in $\mathbb{R}^2$.

\begin{figure}[t]
	\centering
    \includegraphics[width=\linewidth]{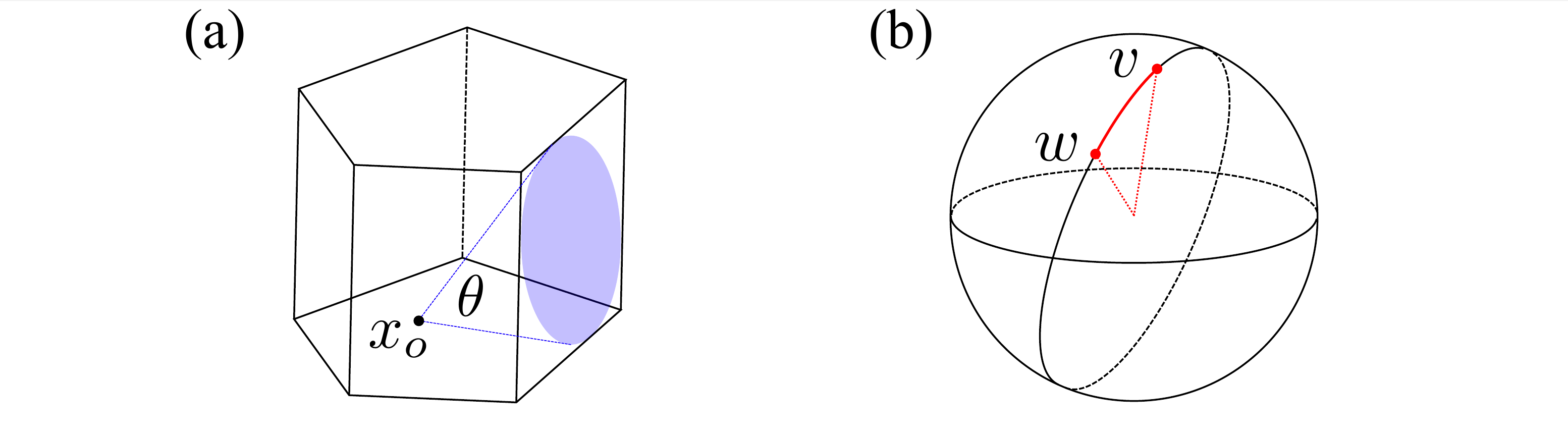}
	\caption{{\bf a} A depiction of the angular span of a face, $\theta$, for a sample polytope in $\mathbb{R}^3$. {\bf b} A visualization of the standard great-circle distance.} 
	\label{fig:fig_3}
\end{figure}

\subsection*{\bf\emph{The Identification Problem for Arbitrary Convex Polygons}}
\label{subsec:arb_pol_determ}
Identification in $\mathbb{R}^N$ follows a largely similar theory, with two substantial changes.
The first is that we must change what is meant by the angular span of a face, the second is that we must deal with the ray placement problem mentioned in Section ``Problem Formulation''.
The notion of angular span is relatively easily adjusted (see Fig.~\ref{fig:fig_3}a).
\begin{definition}[Angular span] \label{def:ang_span_Ndim}
    If $\mathcal{Q}$ is a convex polytope in $\mathbb{R}^N$, $N\ge2$, $x_o$ is an observation point in $\mathcal{Q}$, and $\mathcal{L}$ is a face of $\mathcal{Q}$, the {\it angular span} of $\mathcal{L}$ is the cone angle of the largest circular cone based at $x_o$ so that the cross-section of the cone that is created by plane containing $\mathcal{L}$ lies entirely within $\mathcal{L}$.
\end{definition}
We create a solution for the ray placement problem with an induction algorithm, but first we require some spherical geometry.
Given two points $v,w\in\mathbb{S}^{N-1}$, let $\Dist_{\mathbb{S}^{N-1}}(v,w)$ be the great-circle distance between them (see Fig.~\ref{fig:fig_3}b for visualization in $\mathbb{R}^3$).
Given $v\in\mathbb{S}^{N-1}$, we define a ball of radius $r$ on $\mathbb{S}^{N-1}$ to be
\begin{equation}
    \overline{B}_{v}(r)
    \;=\;\big\{
    w\in\mathbb{S}^{N-1}\;\big|\; \Dist{}_{\mathbb{S}^{N-1}}(v,w)\,\le\,r
    \big\}.
\end{equation}
For example, a ball $\overline{B}_v(\pi)$ of radius $\pi$ is the entire sphere itself, and any ball of the form $\overline{B}_{v}(\pi/2)$ is a hemisphere centered on $v$.
It will be important to know the $(N-1)$-area of the unit sphere $\mathbb{S}^{N-1}$, and also the $(N-1)$-area of any ball $\overline{B}_v(r)\subset\mathbb{S}^{N-1}$.
The standard area formulas from differential geometry are
\begin{equation}
    \begin{array}{rl}
    &\A\big(\mathbb{S}^{N-1}\big)
    =\frac{N\pi^{\frac{N}{2}}}{\Gamma(\frac{N}{2}+1)}, \\
    &\A\Big(\overline{B}_v(r)\Big)
    =\frac{(N-1)\pi^{\frac{N-1}{2}}}{\Gamma(\frac{N-1}{2}+1)}
    \int_0^r\sin{}^{N-2}(\rho)\,d\rho.
    \end{array}
    \label{eqn:AreaExact}
\end{equation}
The evaluation of $\int\sin^{N-2}(\rho)d\rho$ is a bit unwieldy,\footnote{A glance at the integral tables reveals $\int\sin^{N-2}(\rho)d\rho=-\cos(\rho)\,{}_2F_1\big(\frac12,\frac{3-N}{2};\frac32;\cos^2(\rho)\big)$ where ${}_2F_1$ is the usual hypergeometric function.}
but it will be enough to have the bounds
\begin{equation}
    \frac{\pi^{\frac{N-1}{2}}}{\Gamma(\frac{N+1}{2})}\sin^{N-1}(r)
    <
    \A\Big(\overline{B}_v(r)\Big)
    <
    \frac{\pi^{\frac{N-1}{2}}}{\Gamma(\frac{N+1}{2})}r^{N-1}. \label{eqn:AreaBounds}
\end{equation}

We also require the idea of the {\it density} of a set of points.
\begin{definition}[Density of points in $\mathbb{S}^{N-1}$]\label{def:density}
    Let $\mathcal{P}\subset\mathbb{S}^{N-1}$ be a finite collection of points $\mathcal{P}=\{v_1,\dots,v_k\}$, $v_i\in\mathbb{S}^{N-1}$ for $1\le i\le k$.
    We say that the set $\mathcal{P}$ is {\it $\bf\varphi$-dense} in $\mathbb{S}^{N-1}$ if, whenever $v\in\mathbb{S}^{N-1}$, then there is some $v_i\in\mathcal{P}$ with $\Dist_{\mathbb{S}^{N-1}}(v,v_i)\le\varphi$.
\end{definition}

We can now give a solution to the ray placement problem on $\mathbb{S}^{N-1}$.
We use an inductive point-picking process.
Pick a value $\varphi$; this will be the density one desires for the resulting set of directions on $\mathbb{S}^{N-1}$.
Begin the induction with any arbitrary point $v_1\in\mathbb{S}^{N-1}$.
If $\varphi$ is small enough that $\overline{B}_{v_1}(\varphi)$ is not the entire sphere, then we select a second point $v_2$ to be any arbitrary point not in $\overline{B}_{v_1}(\varphi)$.
Continuing, if points $v_1,\dots,v_{i}$ have been selected, let $v_{i+1}$ be any arbitrary point chosen under the single constraint that it is not in any $\overline{B}_{v_j}(\varphi)$, $j<i$.
That is, choose $v_{i+1}$ arbitrarily under the constraint
\begin{equation}
    v_{i+1}\in\mathbb{S}^{N-1}\setminus\left(\overline{B}_{v_1}(\varphi)\cup\dots\cup\overline{B}_{v_i}(\varphi)\right), \label{eqn:pt_pick_constraint}
\end{equation}
should such a point exist.
Should such a point {\it not} exist, meaning $\overline{B}_{v_1}(\varphi)\cup\dots\cup\overline{B}_{v_i}(\varphi)$ already covers $\mathbb{S}^{N-1}$, the process terminates, and we have our collection $\mathcal{P}=\{v_1,\dots,v_i\}$.

Whether an algorithm terminates or not is always a vital question.
This one does, and Lemma \ref{lem:placement} gives a numerical bound on its maximum number of steps.
This process requires numerous arbitrary choices---each point $v_i$ is chosen arbitrarily except for the single constraint that it not be in any of the $\overline{B}_{v_j}(\varphi)$, $j<i$---so it does not produce a unique or standard placement of points.
This contrasts to the very orderly choice of directions $v_i=v_0+2\pi{}i/M$ on $\mathbb{S}^1$ that we relied on in Theorem \ref{thm:bound_in_2D}.
Nevertheless, a set selected in this manner does have valuable properties, which we summarize in the following lemma.
\begin{lemma}[Properties of the placement algorithm] \label{lem:placement}
    Let $\mathcal{P}=\{v_1,v_2,\dots\}\subset\mathbb{S}^{N-1}$ be any set of points chosen using the inductive algorithm above.
    Then
    \begin{itemize}
        \item[(i)] the set $\mathcal{P}$ is $\varphi$-dense in $\mathbb{S}^{N-1}$, meaning that
        $\mathbb{S}^{N-1}=\bigcup_{v_i\in\mathcal{P}}\overline{B}_{v_i}(\varphi),$
        \item[(ii)] the half-radius balls $\overline{B}_{v_i}(\varphi/2)$ are mutually disjoint: $\overline{B}_{v_i}(\varphi/2)\cap \overline{B}_{v_j}(\varphi/2)=\varnothing$ when $i\ne{}j$, and
        \item[(iii)] the number of points in $\mathcal{P}$ is at most
        \begin{equation}
            M\;\le\;\sqrt{2\pi{}N}\left(\frac{1}{\sin(\varphi/2)}\right)^{N-1}. \label{eqn:BoundOnNumber}
        \end{equation}
    \end{itemize}
\end{lemma}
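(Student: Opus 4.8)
The plan is to establish the three claims in sequence, since (ii) relies on the construction, (iii) relies on (ii), and (i) follows from the stopping rule together with the finiteness supplied by (iii). Claim~(i) is essentially the termination condition itself: the process halts precisely when the point just selected admits no successor, i.e.\ when $\overline{B}_{v_1}(\varphi)\cup\cdots\cup\overline{B}_{v_M}(\varphi)=\mathbb{S}^{N-1}$. By Definition~\ref{def:density} this is exactly the statement that $\mathcal{P}$ is $\varphi$-dense, so nothing beyond reading off the stopping rule is needed once we know the algorithm stops.

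For claim~(ii) I would read off from the selection constraint~(\ref{eqn:pt_pick_constraint}) that every newly chosen $v_{i+1}$ lies outside each prior ball $\overline{B}_{v_j}(\varphi)$, $j\le i$; hence $\Dist_{\mathbb{S}^{N-1}}(v_i,v_j)>\varphi$ for all $i\ne j$. If a point $w$ lay in both $\overline{B}_{v_i}(\varphi/2)$ and $\overline{B}_{v_j}(\varphi/2)$, the triangle inequality for the great-circle metric would force $\Dist_{\mathbb{S}^{N-1}}(v_i,v_j)\le\varphi/2+\varphi/2=\varphi$, contradicting the previous line. Thus the half-radius balls are pairwise disjoint.

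Claim~(iii) is the crux, and the approach is a packing (area) argument. Since the equal-radius balls $\overline{B}_{v_i}(\varphi/2)$ are disjoint by~(ii) and all sit inside the sphere, additivity of area gives $M\cdot\A(\overline{B}_v(\varphi/2))\le\A(\mathbb{S}^{N-1})$. Substituting the exact sphere area from~(\ref{eqn:AreaExact}) together with the lower ball-area bound from~(\ref{eqn:AreaBounds}) and rearranging yields
\[
M\;<\;\frac{N\sqrt{\pi}\,\Gamma\!\big(\tfrac{N+1}{2}\big)}{\Gamma\!\big(\tfrac{N}{2}+1\big)}\left(\frac{1}{\sin(\varphi/2)}\right)^{N-1}.
\]
The only delicate point, and what I expect to be the main obstacle, is showing the gamma prefactor is at most $\sqrt{2\pi N}$ rather than merely asymptotic to it. The hard part will be a clean, nonasymptotic control of $\Gamma(\tfrac{N+1}{2})/\Gamma(\tfrac{N}{2}+1)$; for this I would invoke Wendel's inequality $\Gamma(x+s)/\Gamma(x+1)\le x^{s-1}$ (valid for $0<s<1$, $x>0$) with $x=N/2$ and $s=1/2$, giving $\Gamma(\tfrac{N+1}{2})/\Gamma(\tfrac{N}{2}+1)\le(N/2)^{-1/2}=\sqrt{2/N}$. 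Then $N\sqrt{\pi}\cdot\sqrt{2/N}=\sqrt{2\pi N}$, which establishes~(\ref{eqn:BoundOnNumber}). Finiteness of this bound also guarantees the algorithm terminates, closing the loop for~(i).
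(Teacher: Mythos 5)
Your proposal is correct and follows essentially the same route as the paper: prove (ii) from the selection constraint, prove (iii) by the disjoint-ball packing argument combining the exact sphere area from~(\ref{eqn:AreaExact}) with the lower ball-area bound in~(\ref{eqn:AreaBounds}), and then deduce (i) from termination, which the finiteness in (iii) guarantees. The one place you go beyond the paper is the gamma-ratio step: the paper simply asserts $\Gamma(\tfrac{N}{2}+\tfrac12)/\Gamma(\tfrac{N}{2}+1)<\sqrt{2/N}$, whereas you justify it cleanly via Wendel's (Gautschi's) inequality, which is a welcome added detail rather than a different approach.
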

\begin{proof}
    We prove (ii) first.
    Without loss of generality suppose $i>j$.
    Recall the $i^{th}$ point $v_i\in\mathbb{S}^{N-1}$ was chosen under the single condition that $v_i\notin\bigcup_{j=1}^{i-1}\overline{B}_{v_j}(\varphi)$.
    This explicitly means $v_i$ is a distance greater than $\varphi$ from all the points that came before, so the balls of radius $\varphi/2$ around $v_i$ and $v_j$ cannot intersect.
    
    Next we prove (iii).
    Suppose there are $M$ many points in $\mathcal{P}$.
    Because the corresponding balls $\overline{B}_{v_i}(\varphi/2)$ are non-intersecting, we have the following:
    \begin{equation}
        \begin{array}{rl}
        \A\left(\mathbb{S}^{N-1}\right)
        &\ge
        \A\left(\bigcup_i\overline{B}_{v_i}(\varphi/2)\right) \\
        &=\sum_{i=1}^M\,\A\left(\overline{B}_{v_i}(\varphi/2)\right) \\
        &\;\ge\;M\cdot\frac{\pi^{\frac{N-1}{2}}}{\Gamma(\frac{N+1}{2})}\sin^{N-1}(\varphi/2).
        \end{array}
    \end{equation}
    Using (\ref{eqn:AreaExact}) this simplifies to
    \begin{equation}
        M\;\le\;\frac{\Gamma(\frac{N}{2}+\frac12)}{\Gamma(\frac{N}{2}+1)}N\sqrt{\pi}\frac{1}{\sin^{N-1}(\varphi/2)}.
    \end{equation}
    After noticing that $\frac{\Gamma(\frac{N}{2}+\frac12)}{\Gamma(\frac{N}{2}+1)}<\sqrt{2/N}$, we obtain (\ref{eqn:BoundOnNumber}).
    
    Lastly, we prove (i).
    We now know that the set $\mathcal{P}$ is a finite set, with a maximum number of elements given by (\ref{eqn:BoundOnNumber}).
    That means the inductive point-picking process used to create $\mathcal{P}$ must have terminated at some finite stage.
    If $\mathcal{P}=\{v_i\}$ was {\it not} $\varphi$-dense, there would be a point $v\in\mathbb{S}^{N-1}$ at distance greater than $\varphi$ from every $v_i$, that is $v\in\mathbb{S}^{N-1}\cap\bigcup_i\overline{B}_{v_i}(\varphi)$.
    However, because the point-picking process stopped exactly when there were {\it no more} such points to choose from, such a point $v$ cannot exist, and we conclude that $\mathcal{P}$ is $\varphi$-dense.
    \qed
\end{proof}

We can now proceed to the identification problem in $N$ dimensions.
\begin{theorem}[Polytope identification in $\mathbb{R}^N$]\label{thm:bound_in_RN}
    Assume $\mathcal{Q}\in\mathcal{Q}(N,d,l,\alpha)$.
    It is possible to choose a set of $M$ many directions $\{v_i\}_{i=1}^M$ so that given any observation point $x_o\in\mathcal{Q}$, the corresponding rays $\mathfrak{R}_i=\mathfrak{R}_{x_o,v_i}$ have the following properties:
    \begin{enumerate}
        \item The collection of rays $\{\mathfrak{R}_i\}_{i=1}^M$ strikes each polytope face $N$ or more times.
        \item The number of rays $M$ is no greater than 
        \begin{equation}\label{eq:bound_RN}
        M \;\le\;\sqrt{2\pi{}N}\left(\frac{1}{\sin(\frac{1}{12}\theta_{\min})}\right)^{N-1}
        \end{equation}
        where $\theta_{\min}=\arcsin\left(\frac{l}{\,d\,}\sin(\alpha)\right)$.
    \end{enumerate}
\end{theorem}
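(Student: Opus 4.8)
The plan is to combine the angular-span idea from Theorem~\ref{thm:bound_in_2D} with the placement algorithm of Lemma~\ref{lem:placement}. The strategy has three movements: first, reduce the $N$-dimensional angular span of a face to the two-dimensional formula and thereby establish that every face subtends a cone of angle at least $\theta_{\min}=\arcsin(\frac{l}{d}\sin\alpha)$; second, run the point-picking algorithm at a carefully chosen density $\varphi$, which immediately supplies the bound~\eqref{eq:bound_RN} via Lemma~\ref{lem:placement}(iii); third, use a covering argument to show that this density forces at least $N$ directions into each face's cone.

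For the first movement, fix a face $\mathcal{L}$ and an observation point $x_o$. By Definition~\ref{def:ang_span_Ndim}, the angular span is the opening angle of the largest circular cone at $x_o$ whose cross-section in the plane of $\mathcal{L}$ lies inside $\mathcal{L}$. Since $\mathcal{L}$ has inscription size at least $l$, it contains an $(N-1)$-disk of diameter $l$; I would slice the configuration by the $2$-plane $\Pi$ through $x_o$ and a suitable diameter of that inscribed disk, reducing the problem to the planar one of a point viewing a segment of length $\ge l$ at distance $\le d$. The same Law-of-Sines computation as in Theorem~\ref{thm:bound_in_2D}, with the obliqueness of the sightline controlled by the exterior dihedral angle $\alpha$, then yields the lower bound $\theta_{\min}=\arcsin(\frac{l}{d}\sin\alpha)$ on the cone angle. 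I would also record that any direction inside this cone produces a ray that strikes $\mathcal{L}$: because $\mathcal{Q}$ is convex and the plane of $\mathcal{L}$ is a supporting hyperplane, such a ray meets $\partial\mathcal{Q}$ exactly at its crossing of that plane, which the cone property places inside $\mathcal{L}$.

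For the second and third movements, set $\varphi=\theta_{\min}/6$ and let $\mathcal{P}=\{v_i\}$ be the $\varphi$-dense set produced by the algorithm. Lemma~\ref{lem:placement}(iii) gives $M\le\sqrt{2\pi N}\,(1/\sin(\varphi/2))^{N-1}=\sqrt{2\pi N}\,(1/\sin(\tfrac{1}{12}\theta_{\min}))^{N-1}$, which is property~2. For property~1, let $w\in\mathbb{S}^{N-1}$ be the cone axis of a face $\mathcal{L}$, so that every $v_i\in\overline{B}_w(\theta_{\min}/2)$ yields a ray hitting $\mathcal{L}$. Writing $k$ for the number of such $v_i$, I would observe that $\varphi$-density forces the balls $\{\overline{B}_{v_i}(\varphi):v_i\in\overline{B}_w(\theta_{\min}/2)\}$ to cover the concentric ball $\overline{B}_w(\theta_{\min}/2-\varphi)=\overline{B}_w(\theta_{\min}/3)$, by the triangle inequality on $\mathbb{S}^{N-1}$. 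Comparing areas with~\eqref{eqn:AreaBounds} gives
\begin{equation}
k \;\ge\; \frac{\A\big(\overline{B}_w(\theta_{\min}/3)\big)}{\A\big(\overline{B}_{v}(\varphi)\big)} \;>\; \left(\frac{\sin(\theta_{\min}/3)}{\theta_{\min}/6}\right)^{N-1} \;=\; \left(2\,\frac{\sin(\theta_{\min}/3)}{\theta_{\min}/3}\right)^{N-1}.
\end{equation}
Since $\sin(x)/x$ is decreasing and $\theta_{\min}/3\le\pi/6$, the base is at least $6/\pi$, so $k>(6/\pi)^{N-1}\ge N-1$; as $k$ is an integer this forces $k\ge N$, proving property~1.

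I expect the main obstacle to be the first movement: rigorously reducing the $N$-dimensional cone angle to the planar expression $\arcsin(\frac{l}{d}\sin\alpha)$. The delicate point is relating the \emph{exterior dihedral angle} $\alpha$ to the maximal obliqueness of a sightline from $x_o$ to $\mathcal{L}$, since in dimension $N$ this angle governs adjacent faces rather than the viewing direction directly; one must verify that the worst-case planar cross-section is genuinely controlled by $\alpha$, $l$, and $d$ exactly as in two dimensions. By contrast, the covering and area estimates are routine once the density $\varphi=\theta_{\min}/6$ is fixed, and the elementary inequality $(6/\pi)^{N-1}\ge N-1$ (exponential versus linear) closes the argument for every $N\ge2$.
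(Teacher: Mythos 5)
Your proposal is correct and follows essentially the same route as the paper's own proof: the Law-of-Sines computation of $\theta_{\min}$, the placement algorithm run at density $\varphi=\tfrac16\theta_{\min}$ with Lemma~\ref{lem:placement}(iii) giving the bound, and the identical covering argument (balls of radius $\tfrac16\theta_{\min}$ covering $\overline{B}_w(\tfrac13\theta_{\min})$ with centers confined to $\overline{B}_w(\tfrac12\theta_{\min})$, compared via~(\ref{eqn:AreaBounds})). Your closing step is in fact marginally cleaner than the paper's---using $k>(6/\pi)^{N-1}\ge N-1$ plus integrality handles $N=2$ uniformly, where the paper only checks $(1.9)^{N-1}>N$ for $N\ge3$---and the delicacy you flag in relating the dihedral angle $\alpha$ to the worst-case sightline is glossed over by the paper as well, which asserts the $\theta_{\min}$ formula in a single sentence.
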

\begin{proof}
    We imitate the proof of Theorem \ref{thm:bound_in_2D}.
    Using again the Law of Sines, we compute the minimum angular span (see Definition \ref{def:ang_span_Ndim}) of any face of $\mathcal{Q}$ to be $\theta_{\min}=\arcsin\left(\frac{l}{\,d\,}\sin(\alpha)\right)$.
    
    Any circular cone with cone angle $\theta_{\min}$ creates a projection onto the unit sphere, and this projections is a ball of the form $\overline{B}_v(\frac{1}{2}\theta_{\min})$.
    We show that if $\mathcal{P}$ is a $\frac16\theta_{\min}$-dense set, then, inside {\it any} ball of radius $\frac12\theta_{\min}$ must lie at least $N$ many points of $\mathcal{P}$.
    
    The way we count the number points of $\mathcal{P}$ that must lie within $\overline{B}_v(\frac12\theta_{\min})$ is volumetrically.
    To give the idea, note that the balls $\{\overline{B}_{v_i}(\frac16\theta_{\min})\}_{i=1}^M$ cover all of $\mathbb{S}^{N-1}$ and so they must cover both $\overline{B}_v(\frac12\theta_{\min})$ as well as the sub-ball $\overline{B}_v(\frac13\theta_{\min})$.
    But for a ball $\overline{B}_{v_i}(\frac16\theta_{\min})$ to participate in the covering of $\overline{B}_v(\frac13\theta_{\min})$, it's center {\it must} lie within $\overline{B}_v(\frac12\theta_{\min})$.
    Using volumes to count up how many balls of radius $\frac{1}{6}\theta_{\min}$ it takes to cover a ball of radius $\frac{1}{3}\theta_{\min}$, we have an estimate of how many of the points $v_i$ lie in $B_v(\frac12\theta_{\min})$.
    See Fig. \ref{fig:fig_4}.
    
    \begin{figure}[t]
	    \centering
        \includegraphics[width=0.85\linewidth]{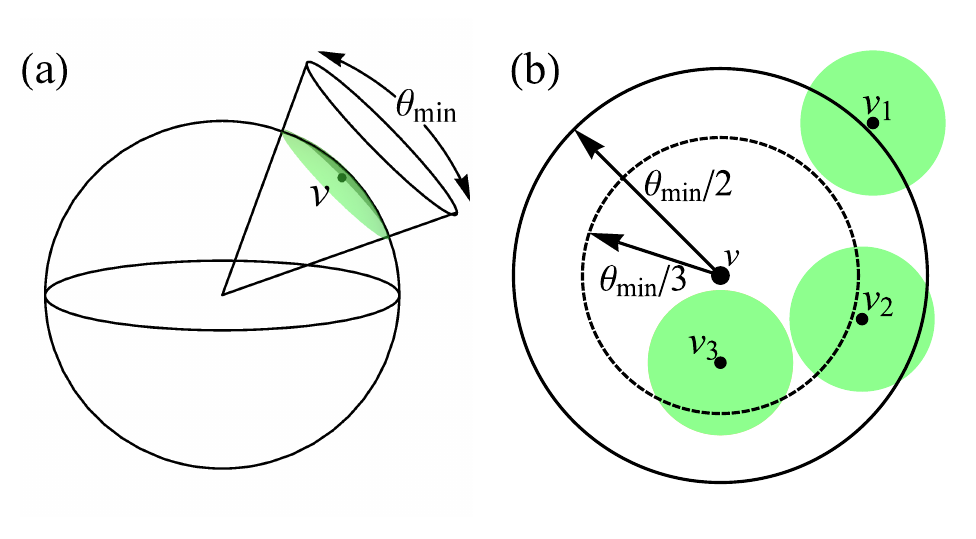}
	    \caption{{\bf a} Projection of a cone with cone angle $\theta_{\min}$ onto $\mathbb{S}^{N-1}$, creating the ball $\overline{B}_{v}(\frac12\theta_{\min})$. {\bf b} The covering argument: the centers $v_i\in\mathcal{P}$ of those balls of radius $\frac16\theta_{\min}$ which help cover $\overline{B}_v(\frac13\theta_{\min})$ must lie within $\overline{B}_v(\frac12\theta_{\min})$.} 
	    \label{fig:fig_4}
    \end{figure}
    
    Using (\ref{eqn:AreaBounds}), since $B_v(\frac13\theta_{\min})$ is covered with balls of radius $\frac16\theta_{\min}$, at least
    \begin{equation*}
        \begin{array}{rl}
        K\;=\;
        \frac{\A\left(\overline{B}_v(\frac13\theta_{\min})\right)}{\A\left(\overline{B}_v(\frac16\theta_{\min})\right)}
        &\;\ge\;
        \frac{\sin\left(\frac13\theta_{\min}\right)^{N-1}}{\left(\frac16\theta_{\min}\right)^{N-1}} \\
        &\;=\;
        \left(2\sinc(\scalebox{1}{$\frac{1}{3}$}\theta_{\min})\right)^{N-1}
        \end{array}
    \end{equation*}
    many balls of radius $\frac16\theta_{\min}$ participate in this cover.
    Thus, from above, at least $K$ many of the points of $\mathcal{P}$ lie within the slightly larger ball $\overline{B}_v(\frac12\theta_{\min}).$
    
    Since $\frac{l}{d}<1$, we can safely assume that $\theta_{\min}<\pi/2$, 
    that is $\arcsin(\frac{l}{d}\sin(\alpha))<\pi/2$.
    Therefore $2\sinc(\frac{1}{3}\theta_{\min})>2\sinc(\pi/6)\approx1.9$, and so $K\ge(1.9)^{N-1}$.
    We easily check that $(1.9)^{N-1}>N$ for $N\ge3$.
    We conclude that more than $N$ many balls of the form $B_{v_i}(\frac{1}{6}\theta_{\min})$ are part of the cover of $B_{v}(\frac{1}{3}\theta_{\min})$, and therefore greater than $N$ many of the points $v_i\in\mathcal{P}$ lie within $B_{v}(\frac{1}{2}\theta_{\min})$.
    
    To conclude, if $\mathcal{P}$ is the $\frac16\theta_{\min}$-dense set produced by the induction algorithm, we now know that (1) at least $N$ many corresponding rays must lie inside of any cone with cone angle $\theta_{\min}$ or greater by what we just proved, and (2) by Lemma \ref{lem:placement} it has fewer than $\sqrt{2\pi{}N}\csc^{N-1}(\frac{1}{12}\theta_{\min})$ elements.
    \qed
\end{proof}

The estimate (\ref{eq:bound_RN}) can be improved if our solution for the placement problem can be improved.
The {\it optimal} placement problem is unsolved in general; this and related problems go by several names, such as the hard spheres problem, the spherical codes problem, the Fejes T\'oth problem, or any of a variety of packing problems.
For a sampling of the extensive literature on this subject, see~\cite{Katanforoush03,Dumer07,Ballinger09,Saff97,Schutte1953,Conway2013}.
Our approach to this theorem, inspired by a technique of \cite{Gromov81}, was chosen because of its easy dimensional scalability---and as one moves through dimensions what is more important is the rate of increase with dimension rather than optimal coefficients.
Our result gives a theoretical bound in any dimension, and means of benchmarking and comparison.
In practice, for the modest number of rays and relatively low dimension, existing empirical algorithms are sufficient. 
In the case of larger numbers of rays or very high dimension, the placement algorithm prior to Lemma~\ref{lem:placement} could certainly be implemented.
The number of rays needed will usually grow exponentially in dimension, but given a fixed dimension the computational cost will be polynomial in the number of rays (this is similar to existing algorithms, although at present the details of how this compares to other algorithms is unknown).

\subsection*{\bf\emph{A classification problem example: The quantum dot dataset}}
To close the paper, we examine Problem~\ref{prob:classification} in the context of the quantum dot dataset studied by~\cite{Zwolak20-RBC}.
In this application, electrons are held within two potential wells of depths $d_1$ and $d_2$, which can be adjusted.
Depending on these values, electrons might be confined, might be able to tunnel between the two wells or travel freely between them, and might be able to tunnel out of the wells into the exterior electron reservoir.
Individual tunneling events can be measured, and, when plotted in the $d_1$-$d_2$ plane, create an irregular tiling of the plane by polygons.
The polygonal chambers represent discrete quantum configurations, and their boundaries represent tunneling thresholds.
The shape of a chamber provides information about the quantum state it represents.

The goal of~\cite{Zwolak20-RBC} was to map the $(d_1,d_2)$ configurations onto the quantum states of the device by taking advantage of the geometry of these polygons. With scalability being the overall objective, it was essential that the mapping requires as little input data as possible.
For theoretical reasons, it is known that each of the lattice's polygons belongs to one of six classes; roughly speaking, these are quadrilateral, hexagon, open cell (no boundaries at all), and three types of semi-open cells. 
Further, the hexagons themselves are known to be rather symmetric: they have center-point symmetry, with four longer edges typically of similar length, and two shorter edges of equal length (see Fig.~\ref{fig:fig_5}a).

\begin{figure}[t]
	\centering
    \includegraphics[width=0.9\linewidth]{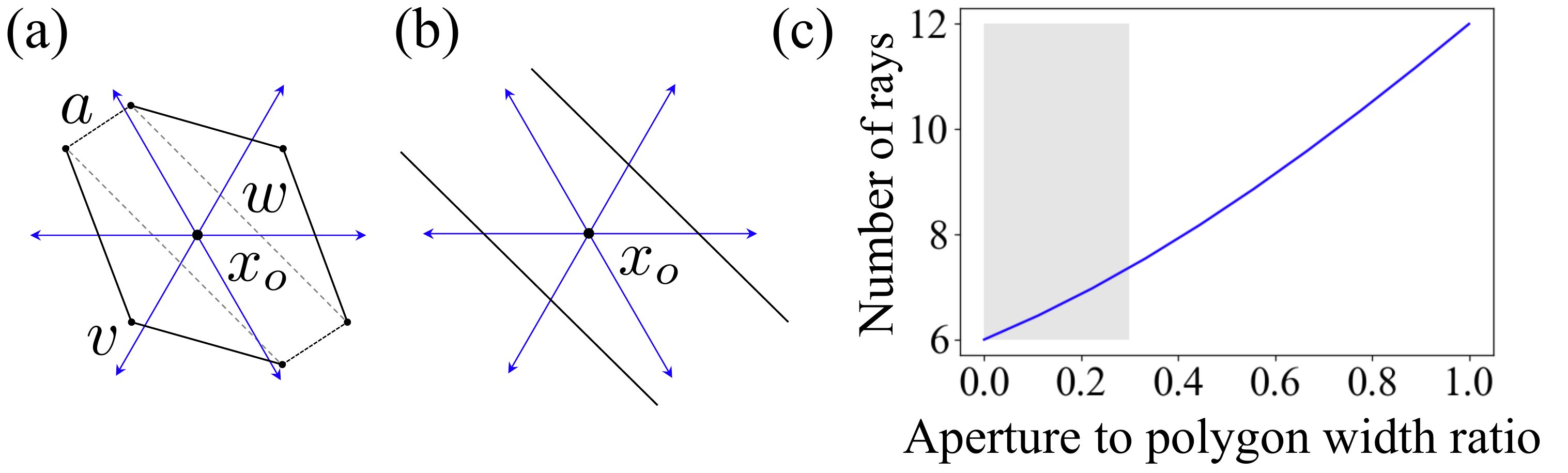}
	\caption{Schematics of two of the five geometrical shapes typical of the quantum dot dataset: {\bf a} a hexagon corresponding to a double-dot state and {\bf b} a strip contained by parallel lines corresponding to a singe-dot state.
	{\bf c} Plot of the lower bound $M$ on the number of rays to the ratio $\nicefrac{a}{w}$, as given by Eq.~(\ref{eq:QD-bound}). The shaded region corresponds to $\nicefrac{a}{w}$ ratios typical for real quantum dot devices.} 
	\label{fig:fig_5}
\end{figure}

In the language of Problem~\ref{prob:classification}, the interesting subclasses of polygons are $\mathcal{C}_1$: the hexagons with the symmetry attributes we described, including the quadrilaterals which are ``hexagons'' with $a=0$; $\mathcal{C}_2$, $\mathcal{C}_3$, $\mathcal{C}_4$: three kinds of semi-open cells contained between parallel or almost parallel lines; and $\mathcal{C}_5$: the open-cell, which has no boundaries at all.
The three classes of polygon $\mathcal{C}_2$, $\mathcal{C}_3$, $\mathcal{C}_4$ are distinguished from one another by their slopes in the $d_1$-$d_2$ plane: polygons in class $\mathcal{C}_2$ are between parallel lines with slopes between about 0 and $-\nicefrac12$, in class $\mathcal{C}_3$ between about $-\nicefrac12$, and about $-2$, and class $\mathcal{C}_4$ between about $-2$ and $-\infty$.
All other polygon types, for these purposes, are unimportant and can go in the ``leftover'' $\mathcal{C}_L$ category.
The question is how few rays are required to distinguish among the polygons within these classes.

In the quantum dot dataset, we must address one additional complication: the ``aperture,'' that is the shortest segment in Fig.~\ref{fig:fig_5}a, is sometimes undetectable.
The physical reason for this is that crossing this barrier represents electron travel between the two wells, and this event is often below the sensitivity of the detector. 

\begin{proposition}\label{thm:bound_qd}
    Let $x_o$ be an observation point which might be within a polygon of type $\mathcal{C}_1$--$\mathcal{C}_5$.
    Five rays are needed to distinguish these types.
    If the short segment is undetectable and the hexagon has the dimensions indicated in Fig.~\ref{fig:fig_5}a, then
    \begin{equation}\label{eq:QD-bound}
        M = \biggl\lceil\frac{6\pi}{\arccos\big(\frac{-1+(a/w)^2}{1+(a/w)^2}\big)}\biggl\rceil,
    \end{equation}
    many rays are needed to distinguish these types.
\end{proposition}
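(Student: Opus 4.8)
The plan is to reduce both assertions to the angular-span counting argument already used for Theorem~\ref{thm:bound_in_2D}, now applied to the specific geometry of the quantum-dot classes. I would first record the fingerprint ``signature'' of each class as seen from an arbitrary interior observation point $x_o$: for the open cell $\mathcal{C}_5$ every ray escapes (all $t_i=\infty$); for a strip $\mathcal{C}_2$, $\mathcal{C}_3$, $\mathcal{C}_4$ the rays strike two parallel lines except along the thin antipodal cone of directions running parallel to the strip; and for the hexagon $\mathcal{C}_1$ the region is bounded, so every ray is finite and the hits fall on edges of three orientations (two long-edge orientations plus the aperture orientation). The two claims then become statements about how many evenly spaced directions are needed to read off enough of each signature.

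For the five-ray claim I would argue the separation class by class. A single finite hit already separates $\mathcal{C}_5$ from the remaining classes, since a strip escapes only along two antipodal directions and hence any $\ge 3$ distinct directions must produce a hit. Two rays landing on a common boundary line determine that line's slope exactly, which is all that is required to sort a strip into $\mathcal{C}_2$, $\mathcal{C}_3$ or $\mathcal{C}_4$; and hits revealing two distinct non-parallel edge orientations single out the hexagon $\mathcal{C}_1$ from the strips. A short counting check then shows that five evenly spaced rays guarantee, for every $x_o$ and every orientation, at least two hits on a common line of any strip and hits in at least two edge orientations for any hexagon, so five suffice. I would finish with an explicit orientation and observation point for which four evenly spaced rays place only a single ray on each strip line—leaving the slope, and hence the class, undetermined—thereby establishing that five are actually needed.

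For the undetectable-aperture claim the obstruction is that, with the two short edges invisible, the hexagon presents only its four long edges with two angular gaps and must still be told apart from a strip. I would locate the worst-case observation point, the one minimizing the angular span $\theta$ of a long edge, and compute $\theta$ by elementary trigonometry (equivalently the Law of Sines) from the hexagon dimensions of Fig.~\ref{fig:fig_5}a, obtaining $\theta = 2\arctan(w/a)$. The double-angle identity $\cos(2\beta)=\frac{1-\tan^2\beta}{1+\tan^2\beta}$ with $\tan\beta = w/a$ gives $\cos\theta=\frac{(a/w)^2-1}{(a/w)^2+1}$, so this $\theta$ is exactly the quantity inside the $\arccos$ in (\ref{eq:QD-bound}). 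To break the residual hexagon-versus-strip ambiguity—the analogue of the polygon/dual ambiguity resolved by a third hit in Theorem~\ref{thm:bound_in_2D}—each worst-case long edge must be struck at least three times: two hits only fix a line that a gap-bearing hexagon and a suitable strip could share, whereas a third, non-collinear hit exposes the corner. Guaranteeing three evenly spaced rays inside an angular interval of length $\theta$ forces $2\pi/M \le \theta/3$, i.e.\ $M \ge 6\pi/\theta$, which is precisely (\ref{eq:QD-bound}).

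The main obstacle I expect is the geometric bookkeeping in the second claim: correctly identifying the extremal $x_o$ minimizing the long-edge angular span and verifying that it yields $\theta = 2\arctan(w/a)$, together with a clean justification that three hits—rather than two—are genuinely necessary to separate a gap-bearing hexagon from the strip classes. The first claim is comparatively routine once the signatures are fixed, the only real care being the explicit four-ray counterexample needed for the lower bound.
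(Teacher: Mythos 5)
Your handling of the five-ray claim runs essentially parallel to the paper's: the paper likewise separates $\mathcal{C}_5$ by the absence of hits, resolves strips into $\mathcal{C}_2$--$\mathcal{C}_4$ by recovering the slope of their boundary lines, and uses an angular counting argument (five evenly spaced rays force a $3$--$2$ split, or $2$--$2$ plus a parallel ray, on any strip); your explicit four-ray counterexample is a reasonable supplement to what the paper leaves implicit. The problem is in your derivation of Eq.~(\ref{eq:QD-bound}), where there is a genuine gap.

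The paper's angle $\theta_{\min}=\arccos\bigl(\tfrac{(a/w)^2-1}{(a/w)^2+1}\bigr)=2\arctan(w/a)$ is \emph{not} the angular span of a single long edge; it is the worst-case angular span of a \emph{pair of long edges joined at a vertex} --- equivalently, of the cone of rays crossing one of the width-$w$ dotted chords in Fig.~\ref{fig:fig_5}a (a chord of length $w$ seen from perpendicular distance $a/2$ from its midpoint subtends exactly $2\arctan\bigl(\tfrac{w/2}{a/2}\bigr)=2\arctan(w/a)$). Your identification is not a harmless relabeling: it is false. As $a/w\to 0$ your claimed single-edge span $2\arctan(w/a)$ tends to $\pi$, whereas the span of one long edge viewed from an observation point near the opposite side of the hexagon stays far below $\pi$ (it can in fact be made arbitrarily small in the thin-hexagon limit), so the Law-of-Sines computation you defer to ``geometric bookkeeping'' cannot return the value you assert. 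Compounding this, your hit requirement is self-defeating: three hits on a \emph{single} long edge are automatically collinear, so they can never ``expose the corner''; corner detection requires the three hits to be spread across two adjacent edges, which is precisely why the paper phrases its requirement in terms of the joined pair (three rays penetrating a dotted chord of length $w$), not per edge --- and why the relevant extremal angle is the one subtended by that chord. In short, you have verified the correct trigonometric identity (your algebra with $\cos 2\beta$ is fine), but you have attached it to the wrong geometric object, so the argument as sketched would not establish Eq.~(\ref{eq:QD-bound}); it recovers the formula only because the target was known in advance.
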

\begin{proof}
Referring to Fig.~\ref{fig:fig_5}a, the dimension $a$ is the dimension of the short side (the ``aperture''), and the dimension $w$ is the hexagon's width, specifically, the distance from an endpoint of one of its short segments to the corresponding endpoint on the opposite short segment, as represented by the two dotted segments in the hexagon of Fig.~\ref{fig:fig_5}a.

First consider a model situation of distinguishing between a line and two rays connected at a vertex.
To distinguish them, the arrangement must be hit with three or more sufficiently spaced rays: if the three rays' intersection points lie on a straight line then the object must be a line, whereas if they do not lie on a straight line we know the object must have a vertex.

Now consider a point $x_o$ placed within a hexagon, as shown in Fig.~\ref{fig:fig_5}a.
We require that either (1) three rays penetrate one of the the dotted lines of length $w$---so that a vertex can be detected as described in the previous paragraph---or (2) two rays penetrate one of the dotted lines, and 1 ray strikes either of the short segments of length $a$.

In the case that the segment $a$ is detectable, two of the longer line segments joined with a shorter segment will always occupy an angular width of at least $\pi$ from any observation point, no matter where it is placed.
For a minimum of three rays to find placement within any angular span of $\pi$, we require five rays.
Among five evenly spaced rays from a point between parallel lines, there are two possibilities: three will strike one line and two will strike the other, or two rays will strike each line and the fifth ray will be parallel to the other two and proceed to infinity---in either of these cases, parallel lines will be resolved along with their orientations in space.
This will also distinguish polygons that are closed (class $\mathcal{C}_1$, where no 3 rays will lie on any line) and polygons that are open (class $\mathcal{C}_5$, where the rays will hit nothing).

In the case that the segment $a$ is not detectable, either pair of two longer segments joined at a vertex must be struck three times.
From inside the polygon, the smallest possible angular span of either pair of two joined long segments is
\begin{equation}
    \theta_{\min}=\arccos\left(\frac{-1+(\nicefrac{a}{w})^2}{1+(\nicefrac{a}{w})^2}\right).  
\end{equation}
A minimum of three rays from $x_o$ must lie within this angular span.
Thus using $M$ rays evenly spaced about the full angular span $2\pi$ of the circle, we find the lower bound on the number of rays is
\begin{equation}
    M = \left\lceil\frac{2\pi}{\nicefrac{\theta_{\min}}{3}}\right\rceil,
\end{equation}
as claimed in Eq.~(\ref{eq:QD-bound}).
\qed
\end{proof}

To close the paper, we compare the theoretical bound given by Eq.~(\ref{eq:QD-bound}) with the performance of a neural network trained to recognize the difference between strips and hexagons.
The question is whether a neural network can come close to the theoretical ideal.

In fact it can.
In actual quantum dot environments, we expect values of $a$ to lie between about $0$ (where the hexagon degenerates to a quadrilateral) and about $\frac12w$; see, for example, Fig.~2 in~\cite{Zwolak20-RBC}.
For these values of $\nicefrac{a}{w}$, Eq.~(\ref{eq:QD-bound}) gives theoretical bounds on the necessary number of rays between six and about nine.
Empirical training experiments discussed in~\cite{Zwolak20-RBC} confirm that six rays and a relatively small DNN---that is a DNN with three hidden fully connected layers of 128, 64, and 32 neurons, respectively---are in fact sufficient to obtain classification accuracy of $96.4\,\%$ (averaged over $50$ training and testing runs, standard deviation $\sigma=0.4\,\%$).
This performance is on par with a ConvNet-based classifier using two-dimensional (2D) images of the shapes for which average accuracy of $95.9\,\%$ ($\sigma=0.6\,\%$) over 200 training and testing runs was reported~\cite{Zwolak18-QLD}. 
More recently, the RBC has been verified using experimental data, both off-line (i.e., by sampling rays from pre-measured large 2D scans) and on-line (i.e., by directly measuring the device response in a ray-based fashion)~\cite{Zwolak21-RBI}.
That paper found the RBC outperformed the more traditional 2D image-based classification of experimental quantum dot data that relied on convolutional neural network while requiring up to $70\,\%$ fewer data points. 
All tests reported in this section were performed on a 2019 MacBook Pro with 2.8 GHz Quad-Core Intel Core i7 processor.

\section*{Conclusions and Outlook}\label{sec:summary}
In conclusion, we have explored the ray-based classification framework for convex polytopes. 
We have proven a lower bound on the number of rays for shape identification in two dimensions and generalized the results to arbitrary higher dimensions. 
Finally, we discussed these results in context of the quantum dot dataset, which was the real-life application that motivated the RBC framework. 

Since objects in $N$-dimensional space can be approximated by convex polytopes, provided they are suitably rectifiable, this seemingly restricted technique opens the way to generalization. 
The problem of dividing a complicated object into a set of approximating polytopes can be considered a form of salience recognition and data compression---of detecting and storing the most useful or important features of the object. 
When the datum itself is scarce or costly to procure, one seeks methods that economize on input data while retaining salient features, even at the expense of some accuracy loss or potentially requiring heavier computing resources. 
RBC incorporating multiple intersections of the rays can be extended to solve problems where multiple nested shapes are present enclosing the observation point.

The approach of this paper gives good estimates on how few data are necessary to distinguish convex objects in arbitrary dimension, using the ray-based technique.
This is an important step as with the unavoidably high computational demands needed to study higher-dimensional datasets, one wishes for a way to tell ahead of time what the smallest possible resource demands might be.
Left for future work is installing and testing practical solutions in $N$ dimensions.
The problem of creating data classes in higher dimensions, which is necessary for Problem 2 to be well defined, is not difficult in dimensions 2 or 3, but present greater difficulties as the number of dimensions grows.
For example, it is unclear to what extent the RBC extends to cases where the number of possible convex polytopes is not know apriori.
Efficient division of geometric objects into usable classes is one way of reducing data requirements and complexity, but is unaddressed in the present study and represents a future avenue to explore.
Another interesting question, also not tackled in the current work, is the utility of the RBC framework to go beyond only assigning a class, to potentially reconstructing an enclosing convex polytope to some specified degree (a qualitative approximation of Problem 1).
In light of these open questions as well as the recently reported experimental validation of the utility of RBC for classifying states of quantum dot devices~\cite{Zwolak21-RBI}, the ray-based data acquisition combined with machine learning appears to be a very promising path forward.

\section*{Acknowledgements}
This research was sponsored in part by the Army Research Office (ARO), through Grant No. W911NF-17-1-0274. S.K. gratefully acknowledges support from the Joint Quantum Institute (JQI)--Joint Center for Quantum Information and Computer Science (QuICS) Lanczos graduate fellowship. The views and conclusions contained in this paper are those of the authors and should not be interpreted as representing the official policies, either expressed or implied, of the ARO, or the U.S. Government. The U.S. Government is authorized to reproduce and distribute reprints for Government purposes notwithstanding any copyright noted herein. Any mention of commercial products is for information only; it does not imply recommendation or endorsement by the National Institute of Standards and Technology.

\section*{Conflict of interest}
The authors declare they have no conflict of interest.


\end{document}